\newcommand{\func}{\textsc}
\newcommand{\denom}{\delta}
\newtheorem{theorem}{Theorem}
\newtheorem{lemma}[theorem]{Lemma}
\newtheorem{corollary}[theorem]{Corollary}
\icmltitlerunning{A Pairwise Fair and Community-preserving Approach to $k$-Center Clustering}
\begin{document}

\twocolumn[
\icmltitle{A Pairwise Fair and Community-preserving Approach to $k$-Center Clustering}




\begin{icmlauthorlist}
\icmlauthor{Brian Brubach}{UMD,Wel}
\icmlauthor{Darshan Chakrabarti}{Dar}
\icmlauthor{John P.\ Dickerson}{UMD}
\icmlauthor{Samir Khuller}{NWU}
\icmlauthor{Aravind Srinivasan}{UMD}
\icmlauthor{Leonidas Tsepenekas}{UMD}
\end{icmlauthorlist}

\icmlaffiliation{UMD}{Department of Computer Science, University of Maryland, College Park, Maryland, USA}
\icmlaffiliation{NWU}{Department of Computer Science, Northwestern University, Evanston, IL, USA}
\icmlaffiliation{Dar}{School of Computer Science, Carnegie Mellon University, Pittsburgh, PA, USA}
\icmlaffiliation{Wel}{Computer Science Department, Wellesley College, Wellesley, MA, USA}

\icmlcorrespondingauthor{Brian Brubach}{bb100@wellesley.edu}

\icmlkeywords{Machine Learning, ICML}

\vskip 0.3in
]



\printAffiliationsAndNotice{}  

\begin{abstract}

Clustering is a foundational problem in machine learning with numerous applications. As machine learning increases in ubiquity as a backend for automated systems, concerns about fairness arise. Much of the current literature on fairness deals with discrimination against protected classes in supervised learning (group fairness). We define a different notion of fair clustering wherein the probability that two points (or a community of points) become separated is bounded by an increasing function of their pairwise distance (or community diameter). We capture the situation where data points represent people who gain some benefit from being clustered together. Unfairness arises when certain points are deterministically separated, either arbitrarily or by someone who intends to harm them as in the case of gerrymandering election districts. In response, we formally define two new types of fairness in the clustering setting, pairwise fairness and community preservation. To explore the practicality of our fairness goals, we devise an approach for extending existing $k$-center algorithms to satisfy these fairness constraints. Analysis of this approach proves that reasonable approximations can be achieved while maintaining fairness. In experiments, we compare the effectiveness of our approach to classical $k$-center algorithms/heuristics and explore the tradeoff between optimal clustering and fairness.

\end{abstract}

\section{Introduction}\label{sec:intro}

Clustering is one of the foundational problems in unsupervised learning and operations research. In it, we seek to partition $n$ data points into \emph{clusters} such that points within each cluster are similar according to some distance function. Its numerous applications include document/webpage similarity for search engines~\cite{Cutting1992,Zamir1997}, targeted advertising including employment opportunities~\cite{Datta15:Automated}, medical imaging~\cite{Srinivasan10:Utility,Malkomes2015}, and various other data mining and machine learning tasks. However, as machine learning has become ubiquitous, concerns have arisen about the ``fairness'' of many algorithms, especially when the data points represent human beings. In this case, we seek additional guarantees on how people will be treated beyond the typical goal of pure optimization.

The $k$-center problem is a fundamental clustering problem. The objective is to select $k$ center points and assign all other points to clusters around them such that the maximum distance from any point to its assigned center is minimized. The problem is NP-hard with the best possible approximation factor being $2$ assuming $P \neq NP$~\cite{Hochbaum1985,Gonzalez1985}. Fairness for $k$-center can have many definitions depending on the application. When the points are labeled (e.g., with racial demographics or another protected class), a \emph{group fairness} constraint may require clusters to contain a minimum amount of diversity among labels~\cite{Chierichetti2017,Bercea2018,Backurs2019}. However, we consider a different kind of fairness which bounds the probability that nearby points (presumably similar or related) are assigned to different clusters. Our approach can also address issues of discrimination against protected classes, albeit in a different way.

We introduce two new notions of fairness to the $k$-center clustering problem, \emph{pairwise fairness} and \emph{community-preserving fairness}. A $k$-center algorithm is \emph{$\alpha$-pairwise fair} if every pair of points has a probability of at most $\alpha$ of being assigned to different centers, where $\alpha(\cdot)$ is an increasing function of the distance between the two points, and $\alpha(0) = 0$. We define a \textit{community} as any subset of points with arbitrary diameter $D$ and a community is \textit{preserved} if its points are assigned to as few different clusters as possible (ideally one cluster). Communities do not need to be known or explicitly identified. An algorithm is $\beta$-community preserving if every community has probability at most $\beta$ of being partitioned into more than $t$ clusters where $\beta$ is an increasing function of the community diameter $D$ and a decreasing function of the number of clusters $t$.

The concept of pairwise fairness is relevant in settings where the points represent people and certain clusters may be preferable to others. We may assume the distance between two points represents some similarity between them and by extension, implies they should be treated similarly (assigned to the same cluster) with some related probability. We are thus being ``fair'' to each point by treating it like its nearby neighbors. The seminal work of~\cite{Dwork2012} also explores this idea of a ``\textit{fairness constraint},'' that ``similar individuals are treated similarly,'' but applied to classification and differing from our work as discussed in Section~\ref{sec:rw}.

Community preservation becomes relevant in settings where the data points are people who gain some benefit from sharing a cluster with their near neighbors. For example, consider the drawing of congressional districts and the practice of gerrymandering which has gained enormous attention and study recently. In a single-member district plurality system (e.g., the US House of Representatives), populations are partitioned into clusters called districts which each elect a single candidate based on a plurality vote. In this setting, a person or political party may draw gerrymandered districts in order to divide a community of people with shared needs, thus weakening or eliminating the power of that community to influence elections. Many cities in the United states demonstrate this phenomenon. Notably, the city of Austin, Texas is distributed among five separate congressional districts while its population is small enough to fit comfortably into two. Although it is the 11th largest city in U.S., Austin residents represent a minority in each of those five districts~\cite{AustinSize2016}.

The US Supreme Court ruled on racial gerrymandering in Thornburg v. Gingles~\cite{CaseThornburg1986}, establishing that communities of people belonging to a racial or language group should not be fractured in order to weaken their vote (subject to very specific criteria). However, partisan gerrymandering was recently ruled not justiciable by that court in Rucho v. Common Cause~\cite{CaseRucho2019}, leaving it up to the voters in individual states to advocate for some fairer approach to districting.

To combat gerrymandering, recent research has explored the use of computational approaches to draw or evaluate congressional districts~\cite{Liu2016,Altman2010,Fryer2011,Altman1998}, including $k$-clustering approaches~\cite{Cohen-Addad2018}. Like many techniques in machine learning, computational redistricting has the familiar promise of being an impartial arbiter in place of biased or adversarial human decisions. While this promise cannot be overstated, we know from the fairness literature that additional fairness constraints are often necessary. An algorithmic redistricting approach may claim to be unbiased because it does not use sensitive features such as party affiliation. However, these sensitive features may be redundantly encoded in other features as in the case of party affiliation correlating with population density in the US. Figure~\ref{fig:badcluster} gives a simple example of how a community can be deterministically separated by $k$-clustering using the $k$-center objective.

This notion of preserving communities can also be extended to problems where people are assigned to a group and benefit from having some neighbors assigned to the same group as in the problem of assigning students to public grade schools. For this problem, Ashlagi and Shi~\cite{Ashlagi2014} incorporated the concept of \textit{community cohesion}, keeping neighborhoods together. They illustrate their point by quoting Boston mayor Menino~\cite{Menino2012} saying in a 2012 State of the City address, 
"Pick any street. A dozen children probably attend a dozen different schools. Parents might not know each other; children might not play together. They can't carpool, or study for the same tests." 

Returning to the issue of protected classes, we observe that the community fragmentation imposed by current implementations of school lotteries disproportionately affects members of protected classes. On the other hand, members of more ``privileged'' classes are more likely to live in a community where assignment is not determined by lottery.

To further elaborate on the school-choice problem, we note that centers need not correspond to physical locations of schools. Many school districts, such as Boston, do not use a model wherein students are always assigned to their nearest school: e.g., a cluster could be a school bus stop for a set of students who will share a bus which is assigned to some school. 
We refer to~\cite{Ashlagi2014} for more details.

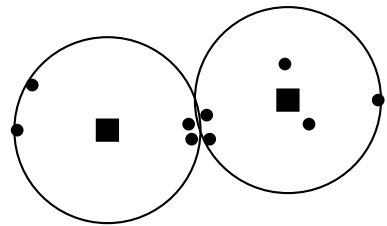
\begin{figure}[h]
	\centering
	\begin{tikzpicture}
[
	xscale=0.4,yscale=0.4,auto,thick,
	duo node/.style={ellipse,
  		inner sep=0pt, minimum width = 30, minimum height = 22, 
  		fill=white,draw
		},
  	tri node/.style={rectangle,
  		inner sep=5pt, minimum width = 30, minimum height = 22,
  		fill=white,draw
		},
  	gray node/.style={circle,
  		inner sep=0pt,minimum size=18pt, 
  		fill=black!20,draw,font=\small},  
  	white node/.style={circle,
  		inner sep=0pt,minimum size=18pt, 
  		fill=white,draw
		},
	black node/.style={circle,
  		inner sep=0pt,minimum size=4pt, 
  		fill=black,draw
		},
	center node/.style={rectangle,
  		inner sep=0pt,minimum size=8pt, 
  		fill=black,draw
		},
	font=\Large,
  	node distance=30pt 
]

\draw (3, 0) circle (88pt);
\draw (9, 1) circle (88pt);

\node[black node] at (0, 0) {};
\node[black node] at (0.5, 1.5) {};

\node[center node] at (3, 0) {};

\node[black node] at (5.7, 0.2) {};
\node[black node] at (5.8, -0.3) {};

\node[black node] at (6.3, 0.5) {};
\node[black node] at (6.4, -0.3) {};

\node[center node] at (9, 1) {};

\node[black node] at (8.9, 2.2) {};
\node[black node] at (9.7, 0.2) {};
\node[black node] at (12, 1) {};

\end{tikzpicture}
	\caption{An optimal $k$-center clustering ($k = 2$) with squares denoting the centers. This deterministically separates the community of four nearby points in the middle even though that fractured community has small diameter. 
	}
	\label{fig:badcluster}
\end{figure}

Thus, we see that pairwise fairness and community preservation have broad applications. Even in the apparently benign application of document clustering, we can view a document as its author's voice which could be negatively affected by an unfair clustering. These fairness constraints can be useful any time we wish to treat nearby points similarly, grant equal access to the strength of a community, or provide protection from efforts to weaken a community.

\subsection{Definitions and Preliminaries}

\paragraph{$k$-center clustering.} 
In the \emph{classical} (or \emph{unfair}) $k$-center problem, we are given a set $U$ of $n$ points and a parameter $k$  as input. We assume we can compute some distance function $d(u, v)$ satisfying triangle inequality on any pair of points $u, v \in U$. The objective is to choose $k$ points in $U$ to be \textit{centers} such that we minimize the maximum distance of any point in $U$ to its nearest center. In clustering, each center then defines a cluster. Typically, a point is assigned to its nearest center. However, in fair clustering and other constrained clustering variants, we may assign points to centers other than the nearest one to satisfy other goals.

\paragraph{$\alpha$-pairwise fairness.} 
We call a $k$-center algorithm $\alpha$-pairwise fair if for every pair of points $u, v \in U$, the probability that $u$ and $v$ are assigned to different centers/clusters is at most $\alpha = \alpha(u, v)$ with $\alpha(u, v)$ being an increasing function of $d(u, v)$. In this paper, we give an algorithm for the function $\alpha = d(u, v) / \denom$ where $\denom > 0$ is some distance chosen by the user. As a corollary, we focus on the natural case of $\denom = \psi R$, where $R$ is the optimal radius that can be achieved by an ``unfair'' algorithm solving the classical $k$-center problem without fairness constraints and $\psi > 0$ is a user-specified constant. The distance $R$ is used as a natural property of the problem input that can suggest what is ``reasonable'' to expect. In practice, $\delta$ could be determined by domain knowledge of a specific application. We present an algorithm that achieves $(d(u, v) / \denom)$-pairwise fairness and show that when $\alpha = (d(u, v) / (\psi R))$, the price of fairness is not too bad using both theoretical bounds and experiments.

\paragraph{$\beta$-community preserving.} 
We define a \textit{community} as any subset of points with arbitrary diameter $D$, and a community is \textit{preserved} if its points are assigned to as few different centers/clusters as possible (ideally just one cluster). In our model, communities do not need to be known or explicitly identified as part of the input. An algorithm is $\beta$-community preserving if every community has probability at most $\beta$ of being partitioned into more than $t$ clusters. Here, $\beta$ is an increasing function of the community diameter $D$ and a decreasing function of $t$. In our algorithm, every community has probability at most $\beta = (D / \denom)^t$ of being partitioned into more than $t$ clusters, $t \geq 1$, where $\denom > 0$ is some distance chosen by the user (This probability is a decreasing function of $t$ since we may assume $D/\denom < 1$: if $D/\denom \geq 1$, then the probability is trivially at most $1$). As with pairwise fairness, we examine the natural choice of $\denom = \psi R$. Here, we show that we can give the guarantee that every community has a probability of at most $\beta = (D / (\psi R))^t$ of being partitioned into more than $t$ clusters. We include $t$ because it captures how fragmented a community becomes more than simply whether or not it has been separated.

\paragraph{Randomization.} 
Both definitions of fairness assume a randomized algorithm and the probabilities discussed are over the randomness in the algorithm. As with some other fairness problems (e.g., fair allocation of indivisible goods), randomness is essentially required to achieve meaningful gains in fairness. Otherwise, it is easy to construct worst case examples where a fair deterministic algorithm must place all points in one large cluster while a fair randomized algorithm could achieve results close to the unfair optimal. 
Randomization can even be necessary to meet certain fairness criteria such as the right to a chance to vote in a district with voter distribution similar to a randomly sampled legal district map~\cite{Brubach2020Meddling}. We further note that our pairwise fairness definition makes no assumption of independence or correlation between the separation probabilities of different pairs of points. It is an individual guarantee for each pair of points. Consideration of multiple points at once is addressed by the community preservation definition.

\paragraph{Focus on $\denom$ as a function of optimal unfair radius $R$.} 
We consider the special case of $\delta$ depending on $R$ in our analysis because $R$ is a reasonable threshold of nearness related to the properties of a given dataset and the $k$-clustering task at hand. For example, if a community is geographically larger than the optimal unfair clusters themselves, it may be reasonable to partition this community into multiple clusters whereas a small community which can fit easily into a cluster should have some chance of being preserved.

\paragraph{Approximation ratio and price of fairness.} 
The \emph{approximation ratio} of an algorithm for an NP-hard minimization problem like $k$-center is typically defined as a bound on the ratio of the algorithm's solution to the solution of an optimal algorithm. The \emph{price of fairness} for a fair variant of a problem is the ratio of the best solution for the fair problem to the best solution for the unfair problem. In our case, the best benchmark we are able to compare our fair algorithm to is the optimal unfair $k$-center solution. Thus, our approximation ratios simultaneously show a bound on the price of fairness for our proposed fairness definitions. This price of fairness can affect the choice to use a fair algorithm for both practical and legal reasons. From a legal perspective, the disparate impact of an unfair algorithm can be permitted due ``business necessity'' if the added cost of fairness is too burdensome~\cite{CivilRights1991,SCOTUS15:Texas}, but a low price of fairness could potentially preclude this defense.

\subsection{Related Work}
\label{sec:rw}

There is a long line of work on the classical $k$-center problem. A 2-approximation is known and is the best possible assuming $P \neq NP$ \cite{Hochbaum1985,Gonzalez1985,Hochbaum1986}. Followup work has studied many variations of the problem including capacitated~\cite{Khuller1996,Fernandes2018}, connected~\cite{Ge2008}, fault tolerant~\cite{Khuller2000,Fernandes2018}, with outliers~\cite{McCutchen2008,Chakrabarty2016,Malkomes2015}, and minimum coverage~\cite{Lim2004}. Other settings include streaming~\cite{Charikar1997,Charikar2003,McCutchen2008}, sparse graphs~\cite{Thorup2001}, and distributed algorithms for massive data~\cite{Malkomes2015}. However, our formulation of pairwise fairness and community preservation, has not been studied.

On the fairness side, our notion of pairwise fairness is partially inspired by~\cite{Dwork2012}. That work focused on binary classification as opposed to clustering and used techniques from differential privacy to achieve fairness guarantees. More specifically, they assume access to a separate similarity metric on the data points and require similar points to have similar distributions on outcomes. While our model is related, it differs in two crucial ways. First, we do not use (or require) a separate similarity metric. The similarity of two points is defined by the same metric space we are clustering in. Second, we bound the probability that two points are actually \emph{assigned} to the same cluster rather than having similar distributions. This is important for applications in which nearby points derive a benefit from being clustered together or when the meaning of a cluster is not defined prior to the realization of assignments.

For $k$-center specifically,~\cite{Chierichetti2017} considered an entirely different ``balance'' constraint definition of fairness (aka group fairness) wherein each point is given one of two possible labels and each cluster should contain a minimum percent representation of each label. Follow-on work expands their model~\cite{Rosner18:Privacy,Bera19:Fair} and addresses concerns in privacy while~\cite{Kleindessner19:Guarantees} applied their definition of fairness to spectral clustering. Additional work improved scalability~\cite{Backurs2019} and improved approximation ratios while allowing an unfair solution to be transformed into a fair one~\cite{Bercea2018}. Separately, and motivated by the bias mitigation in data summarization,~\cite{Kleindessner19:Fair} also looks at a different form of $k$-center fairness. Zemel et al.~\cite{Zemel2013} address fairness in classification 
by first transforming the input data into an intermediate representation that balances goodness of representation with removal of certain traits before classification is performed. This first step is a form of clustering with fairness concerns. 
Finally, there are fair service guarantees for individuals that bound the distance from each point to its nearest center (or facility)~\cite{Harris2019,Jung2020,Mahabadi2020}.

Regarding community preservation, \cite{Ashlagi2014} observed that assigning students to schools via an independent lottery mechanism fractures communities by sending neighboring students to different schools. They proposed a correlated lottery algorithm that that maintains the same expected outcomes for individual students while preserving ``community cohesion.'' We note that they define communities by partitioning a city into a grid with each square representing a community, whereas we allow any bounded diameter subset of points to be a community.

Bounding the probability of separating nearby points and similar negative-binomial-type (or discrete exponential) distributions have been used in numerous other settings. Some examples include locality sensitive hashing (LSH)~\cite{Indyk1998LSH,Gionis1999LSH,Datar2004LSH}, randomly shifted grids~\cite{Hochbaum1985Grids}, low diameter graph decompositions~\cite{Linial1993}, and randomized tree embeddings~\cite{Bartal1996,FRT2003}. Our work differs from this past work in the modeling of fairness applications and the challenge of balancing fairness with the k-center objective which is not guaranteed in something like LSH. More commonly, an approach like LSH is used to speed up and scale clustering algorithms with approximate near neighbor search or partitioning data for parallel and distributed algorithms.

\subsection{Our Contributions}\label{sec:contributions}

In addition to presenting new definitions of fairness in clustering, we show how any algorithm for the $k$-center problem can be extended to ensure $\alpha$-pairwise fairness and $\beta$-community preservation at the expense of a $\log k$ approximation factor (also price of fairness). We bound our fair algorithm in comparison to the optimal radius achieved in the ``unfair'' classical $k$-center problem. There are two reasons for this. One is that the ``unfair'' optimal serves as the best known lower bound to the fair optimal. The other is that it captures the \textit{price of fairness}. In other words, it upper bounds the price we must pay in expanding the radius in order to achieve our fairness objectives.

\begin{theorem}
\label{thm:pair}
There exists an algorithm which finds an $O(\log{k})$-approximation to the $k$-center problem (i.e., the maximum cluster radius is at most $O(R \log k)$) with high probability and such that every pair of points $u$ and $v$ is separated with probability at most $\alpha =  d(u,v) / (\psi R)$, where $R$ is the maximum radius obtained by any chosen $k$-center algorithm and $\psi > 0$ is a user-specified constant.
\end{theorem}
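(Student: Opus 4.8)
The plan is to reduce the problem to the well-understood machinery of \emph{randomly shifted grids} / \emph{low-diameter randomized partitions} layered on top of any black-box $k$-center algorithm. First I would run the chosen $k$-center algorithm on the input $U$ to obtain a set of $k$ centers and a radius $R$ (if the algorithm is a $\rho$-approximation, $R$ is within a $\rho$ factor of optimal, but the theorem is stated relative to whatever radius that algorithm achieves, so I can treat $R$ as given). The idea is then \emph{not} to assign each point to its nearest center, but to impose a random hierarchical partition of the metric space whose cells have bounded diameter, and to designate one center per occupied cell; every point in a cell is assigned to that cell's center. The key quantitative fact I would invoke is the standard padding/separation guarantee: one can build a random partition into pieces of diameter $O(R \log k)$ such that for any two points $u,v$, the probability they land in different pieces is at most $c \cdot d(u,v) / R$ for an absolute constant $c$ — this is exactly the negative-binomial/LSH-type bound alluded to in the related-work discussion (randomly shifted grids of~\cite{Hochbaum1985Grids}, low-diameter decompositions of~\cite{Linial1993}, and tree embeddings~\cite{Bartal1996,FRT2003}). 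Rescaling the grid spacing by the user constant $\psi$ converts the bound into the exact form $d(u,v)/(\psi R)$ demanded by the statement, at the cost of adjusting the diameter bound by a $1/\psi$ factor, still $O(R\log k)$ for fixed $\psi$.

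More concretely, the construction I have in mind is: build a ball-carving / region-growing partition at scale roughly $\psi R$, i.e.\ repeatedly pick an (arbitrary or random-order) center, grow a ball whose radius is $\psi R$ times a random variable drawn from a truncated geometric/exponential distribution on $\{1,\dots,\Theta(\log k)\}$, carve out everything inside, and recurse on the remainder. Because the original $k$-center solution guarantees every point is within $R$ of \emph{some} one of the $k$ centers, after $O(\log k)$ "layers" of growth every point is captured, so the final cell diameters are $O(\psi R \log k) = O(R\log k)$ with high probability (a union bound over the $k$ centers that each geometric variable does not exceed its truncation threshold). Each resulting region gets a single representative center — e.g.\ the ball's seed center, or the nearest original center to some point of the region — and all points in the region are assigned there; the number of distinct assigned centers is at most $k$, so this is a feasible $k$-center assignment with radius $O(R\log k)$.

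For the pairwise fairness bound, the plan is the classical two-line calculation: $u$ and $v$ are separated only if some region-growing step "cuts between" them, i.e.\ the growing ball includes exactly one of $u,v$. Conditioned on reaching the step that first touches the pair, the probability the random radius threshold falls in the narrow annulus separating $u$ from $v$ is at most (something like) $d(u,v)$ divided by the per-layer scale $\psi R$, because the geometric increments have "width" $\psi R$ and the memoryless-type property makes the relevant conditional probability at most $d(u,v)/(\psi R)$; summing the (disjoint) events over layers telescopes to the same bound $d(u,v)/(\psi R)$. Setting the truncated-geometric parameters so the per-step cut probability is exactly $d(u,v)/(\psi R)$ (rather than merely $O(\cdot)$) is where one has to be slightly careful, and it is why $\psi$ appears as a free tunable constant in the statement.

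The main obstacle I anticipate is the simultaneous balancing act: the \emph{same} distributional parameter controls both the separation probability (we want it small, $\le d(u,v)/(\psi R)$, a hard upper bound with the right constant) and the number of layers needed for the partition to terminate with bounded diameter (we want $O(\log k)$ layers so the radius blow-up is only $O(\log k)$). Making the cut probability per layer small enough to get the clean $d(u,v)/(\psi R)$ bound tends to push the expected number of layers up, so the delicate part is choosing the truncated geometric distribution (its success probability $\sim 1/\log k$ and truncation point $\sim \log k$) and the union-bound argument over the $k$ seeds so that \emph{both} the diameter guarantee holds with high probability \emph{and} the pairwise probability comes out exactly as stated with no hidden constant. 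A secondary technical point is handling points that are far apart — when $d(u,v) \ge \psi R$ the claimed bound exceeds $1$ and is vacuous, so the analysis only needs to be tight in the near regime, which is consistent with the $\alpha(0)=0$, increasing-in-distance philosophy of the definition.
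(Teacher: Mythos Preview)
Your proposal is essentially the paper's own approach: run a black-box $k$-center algorithm, then do a ball-carving around the $k$ centers with random radii and assign each point to the first ball that captures it; the memoryless property gives the pairwise bound and a union bound over the $k$ exponential tails gives the $O(R\log k)$ radius w.h.p.

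The one place where you make life harder for yourself than the paper does is the ``balancing act'' you flag as the main obstacle. The paper sidesteps it entirely by using a single untruncated continuous exponential with rate $\lambda = 1/(\psi R)$ (added to the original cluster radius $R_i$, so coverage is automatic for every $\psi>0$). With that choice there are no ``layers,'' no truncation point to tune, and no telescoping sum: conditioned on a ball first reaching the nearer of $u,v$, the memoryless property gives the separation probability as exactly $1-e^{-d(u,v)/(\psi R)} \le d(u,v)/(\psi R)$ in one line, while the same parameter yields $\Pr[X_i > R\log k] = e^{-(\log k)/\psi}$ and hence $\Pr[\max_i X_i > R\log k] \le k^{1-1/\psi}$ by a union bound. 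So both guarantees fall out of the \emph{same} $\lambda$ with no hidden constant and no parameter trade-off; your truncated-geometric formulation would work too but is an unnecessary complication.
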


The community preserving property in Corollary~\ref{cor:com} follows from the pairwise guarantee. A strength of this formulation is that we \emph{do not need to explicitly identify communities} in the data to preserve them with nontrivial probability.

\begin{corollary}
\label{cor:com}
There is an efficient $O(\log{k})$-approximation algorithm for $k$-center 
(i.e., the maximum cluster-radius is at most $O(R \log k)$) with high probability and such that every subset of points with diameter $D$ is partitioned into more than $t$ separate clusters, for any $t \geq 1$, with probability at most $\beta = (D / (\psi R))^t$ where $R$ is the maximum radius obtained by any chosen $k$-center algorithm. Here, $\psi > 0$ is a user-specified constant.
\end{corollary}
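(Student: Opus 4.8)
The plan is to derive Corollary~\ref{cor:com} from Theorem~\ref{thm:pair} by analyzing the random assignment produced by that theorem's algorithm as a \emph{sequential} process and then applying the chain rule. The $O(\log k)$ approximation (maximum radius $O(R\log k)$ with high probability) and the efficiency are inherited verbatim from Theorem~\ref{thm:pair}, since it is the same algorithm; the only new content is the bound $\beta = (D/(\psi R))^t$ on the probability that a community of diameter $D$ is split among more than $t$ clusters. Note first that a black-box union bound over the pairs inside the community is hopeless: it would introduce a spurious dependence on $|C|$ and, worse, only yield a bound linear in $D/(\psi R)$ rather than its $t$-th power, because the pairwise definition permits adversarial correlation among separation events. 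So the argument must use the internal structure of the algorithm — but, as the discussion after the statement suggests, only the \emph{per-step} form of its pairwise guarantee.

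Fix a community $C$ of diameter $D$; we may assume $D < \psi R$, since otherwise $\beta \ge 1$ and there is nothing to prove (consistent with the remark in the preliminaries). Fix an arbitrary anchor $c \in C$. I would expose the algorithm's randomness in \emph{rounds} $1,2,\dots$, where round $i$ is the event ``$C$ gets carved into an $(i{+}1)$-st cluster'': reveal the cluster of $c$, then, in a canonical order (say, by the point that is first separated from $c$), reveal one at a time the clusters of the points of $C$ not yet accounted for; round $i$ ``activates'' precisely when the $i$-th such new cluster appears, and round $i$ can activate only if round $i-1$ did. By construction, $C$ is split into more than $t$ clusters if and only if round $t$ activates. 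The heart of the matter is the claim that, conditioned on the outcomes of rounds $1,\dots,i-1$, the probability that round $i$ activates is at most $D/(\psi R)$ — this is exactly the pairwise separation bound of Theorem~\ref{thm:pair} applied to $c$ and ``the next point of $C$ to be split off'', whose distance to $c$ is at most $D$, now in the conditional world. Granting this, the chain rule gives $\Pr[\text{round } t \text{ activates}] \le \prod_{i=1}^{t} \Pr[\text{round } i \mid \text{rounds } 1,\dots,i-1] \le (D/(\psi R))^t$, which is the desired $\beta$. Because the rounds are \emph{sequential} (round $i$ presupposes round $i-1$), the naive ``$\binom{L}{t}$'' factor that a symmetric ``$t$ cuts among $L$ scales'' count would incur never appears: we multiply one specific chain of $t$ conditional probabilities, not choose $t$ of many.

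The main obstacle is establishing the conditional per-round bound, i.e.\ that conditioning on the first $i-1$ clusters already carved out of $C$ does not inflate the probability that a further, still distinct cluster is carved. This requires the algorithm of Theorem~\ref{thm:pair} to be ``memoryless'' in the appropriate sense: its randomness should be organizable so that, after the first $i-1$ carvings are revealed, the remaining source of randomness still induces a separation probability at most $d(c,u)/(\psi R) \le D/(\psi R)$ between $c$ and any not-yet-carved $u \in C$. Depending on the precise form of that algorithm (a random-threshold or random-order ball-growing scheme, whose cut statistics are of the ``discrete exponential / negative-binomial'' type alluded to earlier), one would verify this via a coupling, or a direct computation showing that the relevant conditional distribution of thresholds/permutation positions still dominates the unconditioned one in the direction that matters. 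One should also check the bookkeeping in the round decomposition — that ``the next point to be split off'' is definable canonically so each activation adds exactly one cluster (hence ``more than $t$ clusters'' $\iff$ round $t$ activates), and that the number of rounds is finite — both routine once the sequential reveal is set up. Everything else (efficiency, and the $O(R\log k)$ radius with high probability) is immediate from Theorem~\ref{thm:pair}.
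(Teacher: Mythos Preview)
Your high-level plan---sequential rounds plus the chain rule, leveraging memorylessness---is exactly the spirit of the paper's argument (Lemma~\ref{lem:com}), and you correctly note that the $O(R\log k)$ radius bound is inherited unchanged. However, your specific decomposition, anchoring at a fixed point $c\in C$ and invoking the pairwise bound between $c$ and ``the next point of $C$ to be split off,'' is not the paper's route and has the gap you yourself flag. The pairwise guarantee of Theorem~\ref{thm:pair} is unconditional and applies to a \emph{fixed} pair; applying it to $c$ and a \emph{random} ``next split-off point,'' in a conditioned world, is not justified without further work. Worse, ``round $i$ activates'' is really a union over all remaining points of $C$, not a single-pair event, so even unconditionally the per-round bound is not literally the pairwise bound.

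The paper avoids this by indexing rounds not by carvings of $C$ but by the \emph{algorithm's own iterations}. Let $C^S$ be the set of indices $i$ for which the expanded ball around $c_i$ of radius $R_i+X_i$ reaches at least one point of $S$. Two observations do all the work. First, if for some $i\in C^S$ that ball contains \emph{all} of $S$, then every still-unassigned point of $S$ is absorbed into $C'_i$ and no later center can split $S$ further; hence, for $S$ to land in more than $t$ clusters, the first $t$ indices of $C^S$ must each separate $S$. Second, the conditional bound is now clean: conditioning on $i\in C^S$ means $R_i+X_i \ge d(c_i,u)$ for the point $u\in S$ closest to $c_i$, and by the memoryless property of the exponential (and independence of the $X_i$), the probability that $R_i+X_i$ fails to reach the farthest point $v\in S$ is at most $1-e^{-(d(c_i,v)-d(c_i,u))/(\psi R)}\le D/(\psi R)$. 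Multiplying $t$ such conditional factors gives $(D/(\psi R))^t$. This sidesteps your anchor entirely: the ``closest/farthest point of $S$ from the current center $c_i$'' replaces your random ``next split-off point'' with deterministic per-center witnesses, making the conditioning immediate rather than the ``main obstacle.''
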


For both Theorem~\ref{thm:pair} and Corollary~\ref{cor:com}, we note that for some pairs of points (or communities) the value of $\alpha$ (or $\beta$) may be greater than $1$ and therefore not a valid probability. For these cases, the bound on fairness is trivially true. 
The constant factors in our big-Oh notation also depend on the constant $\psi$ and our experiments in Section~\ref{sec:exp} show that there are not large hidden constants in practice.

Beyond theoretical results, we further explore the algorithm experimentally in Section~\ref{sec:exp} on 40 different problem instances of a benchmark dataset  
to show that it performs as expected or better. On the benchmark problems, we illustrate in Figure~\ref{plot:pmed1-40} how tuning a parameter in our algorithm can adjust the trade-off between fairness and minimizing the cluster radius. 
In Section~\ref{sec:real-exp},  we evaluate our algorithm on a real dataset over different target numbers of clusters. The results suggest that our fair approach is not only more fair, but more consistent in its fairness as $k$ varies when compared to a standard ``unfair'' algorithm. Thus, we can remove the ability of a bad actor to cause unfairness by adjusting the number of clusters $k$.

While our theoretical and experimental analysis focuses on approximating the radius and fairness, we note that the running time of our proposed algorithm is dependent primarily on the algorithm/heuristic for the initial clustering. Our reassignment algorithm is rather fast with a running time of $O(kn)$. In practice, the running time is dominated by the initial clustering rather than our reassignment algorithm.

\section{The Fair Algorithm}
\label{sec:fairextension}

We show how to extend any $k$-center algorithm to guarantee pairwise fairness at the expense of a larger approximation factor. The idea is to first run an ``unfair'' $k$-center algorithm and order the clusters arbitrarily. Then, one-by-one, we expand the radius of each cluster by a value sampled independently from an exponential distribution. Any point which falls within the radii of more than one of these expanded clusters is assigned to the earliest one in the ordering. 

We use $C_i$ to refer to the $i^\text{th}$ cluster found by the initial ``unfair'' algorithm and $c_i$ to refer to its center. Similarly, we use $C'_i$ to refer to the $i^\text{th}$ expanded cluster that we will finally output and $c'_i$ to refer to its center. For readability, we also refer to $C_i$ and $c_i$ as original and $C'_i$ and $c'_i$ as final. Let $R_i = \max_{u \in C_i} d(c_i, u)$ be the radius of $C_i$ and $R = \max_{i} R_i$ be the maximum radius of any cluster found by the original clustering step. Let $\psi$ be any chosen constant greater than $0$. The approach is summarized in Algorithm~\ref{alg:fair}.

We note that in the for loop of steps 4 to 6 of Algorithm~\ref{alg:fair}, the centers 1 through $k$ are processed in an arbitrary order. Because of this, our proofs also hold if the center are processed in a random order or some particular order aligned with another side objective.

\begin{algorithm}[tb]
   \caption{\func{FairAlg}}
   \label{alg:fair}
\begin{algorithmic}
   \STATE {\bfseries Step 1:} Run any chosen $k$-center algorithm and order the clusters arbitrarily from $1$ to $k$. Let $R$ be the maximum distance of any point to its center.
   \STATE {\bfseries Step 2:} Let $C_i$ be a set of points denoting cluster $i$. Let $c_i \in C_i$ be the center of $C_i$ and $R_i$ be the radius of $C_i$.
   \STATE {\bfseries Step 3:} Treat all points including centers as ``unclustered'' and construct a new set of clusters denoted $C'_i$.
   
   \FOR{$i = 1$ {\bfseries to} $k$}
   
   	\STATE {\bfseries 4:} Sample an independent random variable $x_i$ from an exponential distribution with parameter $\lambda = 1/(\psi R)$. Let $X_i$ be the realization of that random variable.

	\STATE {\bfseries 5:} Construct cluster $C'_i$ by adding every unclustered point within radius $R_i + X_i$ from original center $c_i$.

	\STATE {\bfseries 6:} If $c_i$ was unclustered at the start of this iteration designate it as the center $c'_i$ of $C'_i$. Otherwise, if $c_i$ has been added to a previous cluster $C_j$, $j < i$, then choose any other previously unclustered point in $C'_i$ to be the center $c'_i$. If no such point exists, call the cluster empty.
 
   \ENDFOR
\end{algorithmic}
\end{algorithm}

We first prove that Algorithm~\ref{alg:fair} achieves $\alpha$-pairwise fairness for $\alpha = d(u,v) / (\psi R)$. At a high level, the memoryless property of exponentially distributed random variables allows our algorithm to achieve the guarantee in Lemma~\ref{lem:sep}.

\begin{lemma}
\label{lem:sep}
For any pair of points $u$ and $v$ with distance $d(u,v)$, the probability that Algorithm~\ref{alg:fair} separates $u$ and $v$ into two separate clusters is at most $d(u,v) / (\psi R)$ where $R$ is the maximum radius obtained by the initial algorithm used in step 1 and $\psi > 0$ is a user-specified constant.
\end{lemma}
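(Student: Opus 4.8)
The plan is to reformulate the assignment rule of Algorithm~\ref{alg:fair} in terms of one‑dimensional thresholds and then exploit the memorylessness of the exponential distribution. For a point $w$ and a cluster index $i$, set $T_i(w) = d(c_i, w) - R_i$. By Step~5, $w$ falls inside the expanded ball of $C'_i$ — we say cluster $i$ \emph{captures} $w$ — exactly when $X_i \ge T_i(w)$, and $w$ is ultimately assigned to the smallest‑index cluster that captures it. Since the initial clustering places $w$ in some $C_j$ with $d(c_j,w)\le R_j$, we have $T_j(w)\le 0\le X_j$, so every point is captured by at least one cluster; hence $J := \min\{\, i : X_i \ge \min(T_i(u),T_i(v)) \,\}$ — the first cluster to capture at least one of $u,v$ — is well defined and finite. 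Writing $a_i=\min(T_i(u),T_i(v))$ and $b_i=\max(T_i(u),T_i(v))$, the triangle inequality gives $b_i - a_i = |d(c_i,u)-d(c_i,v)| \le d(u,v)$, since the $R_i$ terms cancel.

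The first key step is a clean characterization of separation: conditional on $J$, the points $u$ and $v$ are separated if and only if $X_J < b_J$. Indeed, if $X_J \ge b_J$ then cluster $J$ captures both points, and since no earlier cluster captured either one, both are assigned to $J$. If instead $a_J \le X_J < b_J$, then $J$ captures exactly one of the two points, which is therefore assigned to $J$, while the other is captured for the first time only by some strictly later cluster — so the two are separated. As $X_J \ge a_J$ holds by definition of $J$, the separation event is exactly $\{X_J < b_J\}$.

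Next I would condition on the value of $J$ and bound the conditional separation probability \emph{uniformly in $i$}. The event $\{J=i\}$ equals $\{X_j < a_j \text{ for all } j<i\} \cap \{X_i \ge a_i\}$. Because the $X_j$ are independent, the first part is irrelevant to the law of $X_i$; and if $a_i \ge 0$, conditioning on $\{X_i \ge a_i\}$ together with the memoryless property makes $X_i - a_i$ again exponential with rate $\lambda = 1/(\psi R)$, so $\Pr[X_i < b_i \mid J=i] = 1 - e^{-\lambda(b_i-a_i)} \le \lambda(b_i-a_i) \le \lambda\, d(u,v)$. The case $a_i < 0$ is handled directly: then $\{X_i \ge a_i\}$ is automatic, so $\Pr[X_i < b_i \mid J=i] = \Pr[X_i < b_i]$, which is $0$ if $b_i<0$ and otherwise at most $\lambda b_i \le \lambda(b_i - a_i) \le \lambda\, d(u,v)$. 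Summing over $i$ and using $\sum_i \Pr[J=i] = 1$ then gives $\Pr[\text{separated}] = \sum_i \Pr[J=i]\,\Pr[X_i < b_i \mid J=i] \le \lambda\, d(u,v) = d(u,v)/(\psi R)$, which is the claim.

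The step I expect to be the crux is this last one: a naive union bound over the events ``cluster $i$ captures exactly one of $u,v$ first'' loses a factor of $k$, so the argument must instead be organized around the single random index $J$, converting the sum over $i$ into the expectation of a conditional probability that is bounded by $\lambda\, d(u,v)$ \emph{regardless of $i$ and of the realized value $a_i$}. It is precisely the memorylessness of the exponential that delivers this $i$‑independent conditional bound. The remaining bookkeeping — the $a_i<0$ and $b_i<0$ edge cases and the finiteness of $J$ — is routine.
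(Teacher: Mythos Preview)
Your proof is correct and follows essentially the same approach as the paper's: focus on the first cluster $J$ whose expanded ball reaches at least one of $u,v$, and use the memorylessness of the exponential to bound the conditional probability that it reaches only one of them. Your version is more explicit than the paper's---you formalize $J$ as a random index, decompose over $\{J=i\}$, and handle the $a_i<0$ and $b_i<0$ edge cases---whereas the paper gives a two-line sketch that leaves these details implicit.
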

\begin{proof}

For an arbitrary pair of points $u, v \in U$, consider the first iteration $i$ in which at least one of the points is added to a final cluster $C'_i$. Without loss of generality, let $u$ be the closer point to the original center $c_i$ and note that $d(c_i, v) - d(c_i, u) \leq d(u, v)$ due to triangle inequality. If $d(c_i, v) < R_i$, both points will be added to $C'_i$ regardless of the value of $X_i$ and the probability of separating them is $0$. Otherwise, the probability of separating them is the probability that the value $R_i + X_i$ falls between $\max(d(c_i, u), R_i)$ and $d(c_i, v)$ given that $R_i + X_i > d(c_i, u)$.
\begin{align*}
\Pr&[\text{$u$ and $v$ are separated by $C'_i$} \, | \, R_i + X_i > d(c_i, u)] \\
&\leq 1 - e^{-\lambda d(u,v)} \label{E-1} 
= 1 - e^{-d(u,v) / \psi R} \\
&\leq \frac{d(u,v)}{\psi R}
\end{align*}
\end{proof}

We now bound the amount that the radius of any cluster will increase beyond the maximum value $R$ achieved by the original ``unfair'' algorithm from step 1 of Algorithm~\ref{alg:fair}.

\begin{lemma}
\label{lem:approx}
The maximum radius of a cluster found by Algorithm~\ref{alg:fair} is $O(R \log{k})$ with high probability. 
\end{lemma}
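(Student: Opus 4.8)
The plan is to control the maximum over the $k$ clusters of the increase $X_i$, each of which is exponentially distributed with parameter $\lambda = 1/(\psi R)$. The final radius of cluster $C'_i$ is at most $R_i + X_i \le R + X_i$ (since a final cluster is formed by taking all unclustered points within radius $R_i + X_i$ of $c_i$), so it suffices to bound $\max_i X_i$ with high probability. First I would recall that for a single exponential random variable with parameter $\lambda$ we have $\Pr[X_i > s] = e^{-\lambda s}$, so setting the threshold $s = \psi R \cdot c \ln k$ for a suitable constant $c$ gives $\Pr[X_i > s] = e^{-c \ln k} = k^{-c}$.

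Next I would apply a union bound over the $k$ clusters: $\Pr[\max_i X_i > c \psi R \ln k] \le k \cdot k^{-c} = k^{-(c-1)}$. Choosing $c = 2$ (or any constant $c > 1$) makes this probability at most $1/k$, which is $o(1)$, i.e., the event fails with high probability. On this high-probability event, every cluster $C'_i$ has radius at most $R + 2\psi R \ln k = O(R \log k)$, since $\psi$ is a fixed constant. This establishes the claim.

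The only subtlety worth flagging is the boundary case $k = 1$ (where ``with high probability'' in terms of $k$ degenerates), but there a single exponential tail bound of the form $\Pr[X_1 > t\psi R] = e^{-t}$ still gives an $O(R)$ radius except with exponentially small probability, so the statement holds a fortiori; one can simply remark that we may assume $k \ge 2$. I do not expect any real obstacle here — the argument is a routine exponential tail bound followed by a union bound — so the main thing is just to state the constants cleanly and note that $R_i \le R$ lets us reduce bounding the final radius to bounding $\max_i X_i$.
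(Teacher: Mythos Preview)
Your tail-bound argument for $\max_i X_i$ is correct and in fact slightly cleaner than the paper's (you introduce the constant $c$ so that the bound works for any fixed $\psi>0$, whereas the paper's stated threshold $R\log k$ gives $k^{1-1/\psi}$, which only vanishes when $\psi<1$). The approach---exponential tail plus union bound---is the same as the paper's.

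However, there is a genuine gap: your claim that ``the final radius of cluster $C'_i$ is at most $R_i+X_i$'' tacitly assumes that the center of $C'_i$ is the original center $c_i$. Step~6 of Algorithm~\ref{alg:fair} allows $c_i$ to have been captured by some earlier cluster $C'_j$ with $j<i$; in that case a \emph{different} point $c'_i\in C'_i$ is designated as the center, and the radius must be measured from $c'_i$, not from $c_i$. Since both $c'_i$ and every other point of $C'_i$ lie within distance $R_i+X_i$ of $c_i$, the triangle inequality gives a radius of at most $2(R_i+X_i)\le 2(R+\max_j X_j)$, which is still $O(R\log k)$ on your high-probability event. The paper makes exactly this observation in the second paragraph of its proof. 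You should add this one-line triangle-inequality step; otherwise the argument does not cover all clusters produced by the algorithm.
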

\begin{proof}

We start by upper bounding the probability that any cluster $C'_i$ contains a point at distance greater than $O(R \log{k})$ from the original center $c_i$ of $C_i$. This will suffice to prove the lemma for the clusters where $c'_i = c_i$.
\begin{align*}
\Pr[\exists X_i > R \log{k}] 
&\leq k \Pr[X_i > R \log{k}] \\
&= k e^{-\lambda R \log k} 
= k e^{-\log k / \psi} \\
&= k^{1 - 1 / \psi}
\end{align*}
Now, suppose $c_i$ was added to some cluster $C'_j$, $j < i$, and could not be chosen as the final center of $C'_i$. Then the chosen center $c'_i$ of $C'_i$ must be at most $R \log{k}$ distance from $c_i$ with high probability by the above bound and the fact that $X_i$ and $X_j$ were sampled independently. Thus, by triangle inequality, the radius of such a cluster would be at most $2 R \log{k} = O(R \log{k})$ with high probability.
\end{proof}

Lemma~\ref{lem:com} extends Lemma~\ref{lem:sep} to community preservation.

\begin{lemma}
\label{lem:com}
For any subset of points $S$ with diameter $D$, the probability that Algorithm~\ref{alg:fair} partitions $S$ into more than $t$ separate clusters, $t \geq 1$, is at most $(D / (\psi R))^t$ where $R$ is the maximum radius obtained by the initial algorithm used in step 1 and $\psi > 0$ is a user-specified constant.
\end{lemma}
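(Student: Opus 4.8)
\emph{Proof idea.} The plan is to prove the bound by a chain of conditional probabilities, exploiting the same memoryless property of the exponential distribution used in Lemma~\ref{lem:sep}, but now tracking how many distinct final clusters ever acquire a point of $S$. Process the centers in the (arbitrary) order $1,\dots,k$ and reveal $X_1,\dots,X_k$ one at a time. Say that cluster $C'_i$ is \emph{born for $S$} if $C'_i$ contains a point of $S$ that was still unclustered at the start of iteration $i$; let $\sigma_1<\sigma_2<\cdots$ be the iterations of successive births and let $m$ be the total number of births. Since each birth claims at least one new point of $S$, and every point of $S$ lies in some original cluster $C_i$ (hence is clustered by iteration $i$ at the latest), $m$ is finite and equals exactly the number of final clusters into which $S$ is partitioned. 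So it suffices to show $\Pr[m > t]\le (D/(\psi R))^t$, and since $m\ge 1$ always, it is enough to establish
\begin{equation*}
\Pr[m \ge t+1 \mid m \ge t] \le \frac{D}{\psi R}\qquad\text{for every } t\ge 1,
\end{equation*}
and then multiply these inequalities over $t$ (discarding trivially any $t$ with $\Pr[m\ge t]=0$).

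To prove the displayed inequality, note $\{m\ge t\}=\{\sigma_t<\infty\}$, and condition on $\{\sigma_t = i\}$ together with the values $X_1,\dots,X_{i-1}$. On any history consistent with $\sigma_t = i$, those earlier values determine everything through iteration $i-1$: in particular the (nonempty) set $T\subseteq S$ of points of $S$ still unclustered at the start of iteration $i$, and, writing $\rho=\min_{p\in T} d(c_i,p)$ and $\rho'=\max_{p\in T} d(c_i,p)$, the condition ``$\sigma_t=i$'' reduces to $R_i+X_i\ge \rho$ (i.e.\ $C'_i$ actually grabs a point of $T$). Moreover, on the event $\sigma_t=i$, the event $\{m\ge t+1\}$ is exactly the event that $S$ is \emph{not} fully absorbed at iteration $i$, i.e.\ $R_i+X_i<\rho'$. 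The geometric input is that $T\subseteq S$ has diameter at most $D$, so $\rho'\le \rho+D$ by the triangle inequality. Hence, using that $X_i$ is independent of $X_1,\dots,X_{i-1}$ and the memoryless property,
\begin{align*}
\Pr[m\ge t+1 \mid \sigma_t=i,\ X_1,\dots,X_{i-1}]
&= \Pr\!\left[R_i+X_i<\rho' \mid R_i+X_i\ge \rho\right]\\
&\le 1-e^{-\lambda(\rho'-\rho)} \le 1-e^{-\lambda D} \le \lambda D = \frac{D}{\psi R}.
\end{align*}
Averaging over $X_1,\dots,X_{i-1}$ and then over $i$ (the events $\{\sigma_t=i\}$ partition $\{m\ge t\}$) gives $\Pr[m\ge t+1\mid m\ge t]\le D/(\psi R)$, which is what we need.

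I expect the only real subtlety to be the bookkeeping around the \emph{random} stopping time $\sigma_t$: one must condition on $\{\sigma_t=i\}$ for a fixed $i$ \emph{jointly} with the earlier exponentials, verify that this makes everything except the single fresh variable $X_i$ deterministic, and check that the conditioning contributes only the one-sided constraint $R_i+X_i\ge\rho$ so that memorylessness applies cleanly. The degenerate case $\rho\le R_i$ (where $C'_i$ is guaranteed to grab a point of $T$ and no conditioning on $X_i$ is needed) should be handled separately, but it only makes things easier since then $\Pr[R_i+X_i<\rho']=\Pr[X_i<\rho'-R_i]\le\Pr[X_i<D]=1-e^{-\lambda D}$. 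Everything else is the same exponential-tail estimate already used in Lemmas~\ref{lem:sep} and~\ref{lem:approx}, together with $1-e^{-x}\le x$ and $\lambda=1/(\psi R)$.
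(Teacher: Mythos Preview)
Your proposal is correct and follows essentially the same approach as the paper: both arguments process the centers in order, use the memorylessness of the exponential to bound by $D/(\psi R)$ the chance that a cluster reaching $S$ fails to absorb all of it, and then chain $t$ such bounds. Your version is somewhat more carefully formalized---you track the still-unclustered subset $T\subseteq S$ and condition explicitly on the stopping time $\sigma_t$ together with the earlier $X_j$'s, whereas the paper works with the full set $S$ and a set $C^S$ of clusters whose balls reach $S$, asserting the $t$ separation events are ``independent after conditioning on membership in $C^S$''---but the underlying idea and structure are the same.
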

\begin{proof}

To bound the probability of the number of final clusters $S$ is partitioned into, let $j$ be the index of the last cluster to recruit a member of $S$. Let $C^S$ be the set of clusters where some $w \in S$ has $d(c_i, w) \leq R_i + X_i $ and $i \leq j$. In other words, $C^S$ contains the only clusters which could possibly separate $S$. 
We observe that the final number of clusters is upper bounded by the number of clusters in $C^S$ whose radii around original center $c_i$ separates $S$ regardless of whether the cluster $C'_i$ was actually able to recruit any unclustered points from $S$. We note that such a separation can increase the number of partitions by at most one. 

By the same arguments as in the proof of Lemma~\ref{lem:sep}, given that at least one point $w \in S$ has $d(c_i, w) \leq R_i + X_i $, the probability that the radius around original center $c_i$ separates $S$ is at most $d / \psi R$. This follows from taking $u$ and $v$ to be the points in $S_i$ which are closest and farthest, respectively, from the center and upper bounding $d(c_i, v) - d(c_i, u) \leq d(u, v) \leq d$. 
We further note that if any $C'_i \in C^S$ fails to separate $S$, then any unassigned points in $S$ will be assigned to $C'_i$ and no future clusters will be able to separate $S$. Thus, for $S$ to be split into more than $t$ clusters, the first $t$ clusters in $C^S$ must each separate $S$. This occurs independently with probability at most $d / \psi R$ for each cluster after conditioning on the clusters' membership in $C^S$.
\end{proof}

\section{Benchmark dataset experiments}
\label{sec:exp}

We ran experiments on the well-known $p$-median dataset from OR-Lib~\cite{OR-Lib1990} which contains 40 different problem instances. It was originally generated for the $p$-median problem~\cite{Beasley1985}, but has since been commonly used to evaluate $k$-center algorithms and heuristics~\cite{Mihelic2005,Garcia-Diaz2017}. Another advantage to benchmarking with this data is that the optimal radius is now known for each of the 40 problem instances in the dataset. The specified number of centers, $k$, varies across the instances with the smallest being $k = 5$ and the largest being $k = 200$. We evaluate our approach on all 40 problem instances.

\begin{figure*}[h!]
\includegraphics[width=9cm]{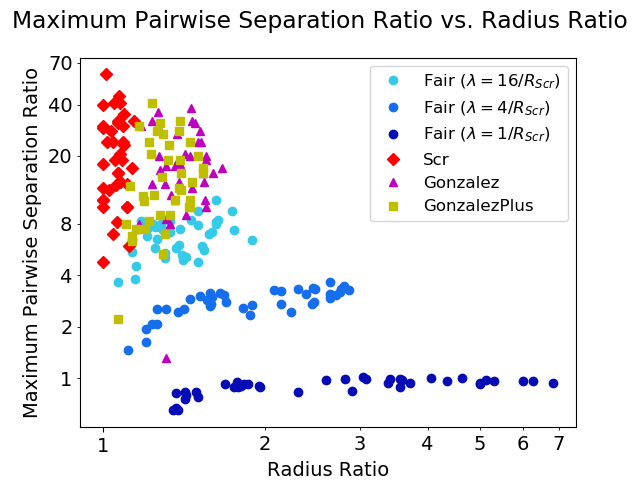}
\includegraphics[width=9cm]{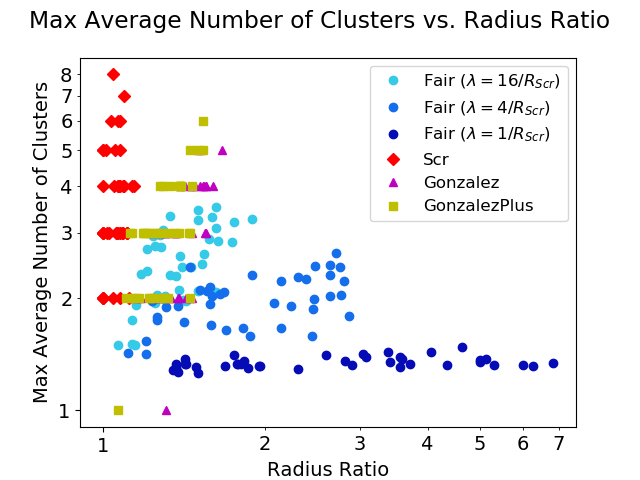}
  \caption{Comparison across all 40 instances of the pmed dataset. The three shades of blue circles show our algorithm parameterized by $\lambda$ of $16/R_{Scr}$, $4/R_{Scr}$, and $1/R_{Scr}$, while other shapes show the unfair algorithms. Points closer to the bottom are more fair while points closer to the left represent solutions with a smaller radius. Our algorithm outperforms the unfair algorithms in both separation ratio (left) and community preservation (right) at the expense of radius as expected. Comparing the three versions of the fair algorithm, we see a clear trade-off between fairness and minimizing the radius.
  }
\label{plot:pmed1-40}
\end{figure*}

\subsection{Experiment design}
\label{sec:expdesign}

We compare three ``unfair'' algorithms to multiple versions of our fair algorithm using different parameters. In all cases, we use $d(u,v)/R_{Scr}$ as the target separation probability bound where $R_{Scr}$ is the radius found by Scr heuristic defined below. This choice is somewhat arbitrary, but it provides a fixed target to compare the different algorithms and the Scr radius serves as a fairly close approximation to unfair optimal, which we assume is unknown to the algorithms. 
Thus, if someone were to apply our algorithm in practice, the radius found by Scr (or other chosen heuristic) would be their best guess at the optimal radius. 
Each of the three deterministic ``unfair'' algorithms was run once per dataset, while each fair algorithm was run for $10{,}000$ trials in order to evaluate average performance.

\paragraph{The ``unfair'' algorithms.} 
In order to compare and evaluate our algorithm, we implemented three algorithms for the classical $k$-center problem: Gonz1, Gonz+, and Scr. The first two are variations of the famous Gonzalez algorithm~\cite{Gonzalez1985}. While they do not achieve the strongest results on this dataset, they give theoretically optimal approximations and are known for their exceptional speed and simplicity. The third algorithm, Scr, achieves nearly optimal results~\cite{Mihelic2005} on the dataset. Recent heuristics have yielded marginal improvements over Scr~\cite{Garcia-Diaz2017}, but we choose Scr because it achieves nearly the same results while remaining fairly simple to implement and reproduce.

\paragraph{Fair algorithm implementation.} 
Our implementation of the fair algorithm uses Scr to find the initial set of centers. We choose Scr since it gets the tightest radius to begin with. We parameterize our algorithm with the mean, $1/\lambda$, of the exponential distribution we sample from, where $\lambda$ is the exponential parameter used in Algorithm~\ref{alg:fair}. For our ``Exact'' fair algorithm we set $\lambda = 1/R_{Scr}$ which corresponds to a theoretical separation ratio at most $d(u,v)/R_{Scr}$ for each pair of points $(u, v)$. For our ``Medium'' fair algorithm, we set $\lambda = 4/R_{Scr}$ since $R_{Scr}/4$ is our target community radius described in our comparison criteria below. Finally, for our ``Tight'' fair algorithm, we simply divide our mean by another factor of $4$ to get $\lambda = 16/R_{Scr}$. Using three different parameters gives some indication of the compromise that can be reached between minimizing the radius and optimizing the fairness.

In addition, our implementation makes two natural modifications to Algorithm~\ref{alg:fair} that do not affect the theoretical bounds. First, the list of centers found in Step 1 is uniformly randomly permuted before growing the clusters. Second, if we have to choose a new center point in Step 6, we choose the point in the cluster which minimizes the radius as opposed to any arbitrary point.

\paragraph{Comparison criteria.} 
We compared the algorithms in terms of three criteria: radius, pairwise fairness, and community preservation. First, we looked at the approximation of the radius with respect to the unfair optimal. This is the ratio of the radius found by each algorithm to the optimal radius (known for this dataset due to~\cite{Daskin2000,Elloumi2004,Mladenovic2003}). For the randomized algorithms, we give the average radius across all trials. 
More specifically, this is an average taken over the max radius of each trial derived from the cluster with the largest radius in keeping with the $k$-center objective.

To evaluate the pairwise fairness, we considered only pairs of points with $d(u,v) \leq R_{Scr}$ (i.e. target maximum separation probability at most $1$). For each such pair, we compute the ratio of the algorithm's separation probability to the target maximum separation probability. 
For the deterministic algorithms, the numerator of this ratio is $0$ (not separated) or $1$ (separated). For the randomized algorithms, the separation probability is given as the number of trials where the points were separated divided by the total number of trials ($10{,}000$). Then, for each algorithm, we take the worst separation probability ratio among all pairs of points with distance at most $R_{Scr}$. For the deterministic algorithms this is determined by the nearest pair of points which is separated.

In order to address communities, we needed to define some specific type of community since analyzing every possible subset of points is infeasible. In practical applications there may be some specific target communities based on domain information. However, for this experiment we say that every point defines a community including itself and all other points within a distance of at most $R_{Scr}/4$ from it. 
In practical terms, each point could be a person and its community could be that person's neighborhood. We assume the community radius is smaller than the clustering radius as is the case with real world examples such as congressional voting districts. 
For each point's community, we count the number of different clusters its points have been assigned to. To show the worst case, we highlight the most fractured community, meaning the community split into the most different clusters. For the randomized algorithms, each community gets an average value over all trials and we note the community with the worst average.

\begin{figure*}[h!]
\begin{center}
\includegraphics[width=7cm]{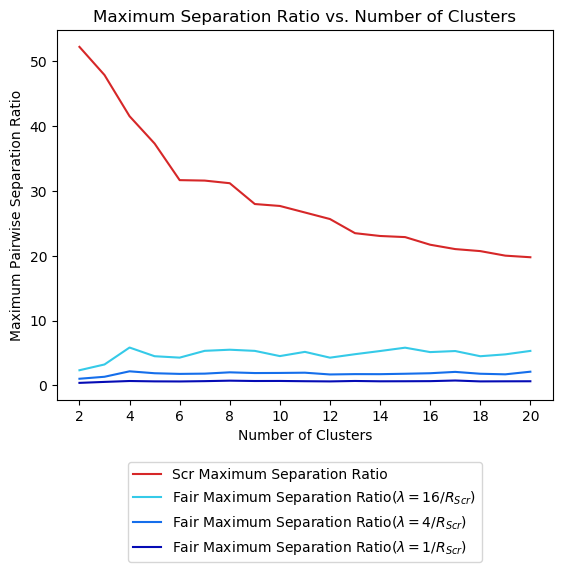}
~~~~~~~~
\includegraphics[width=7cm]{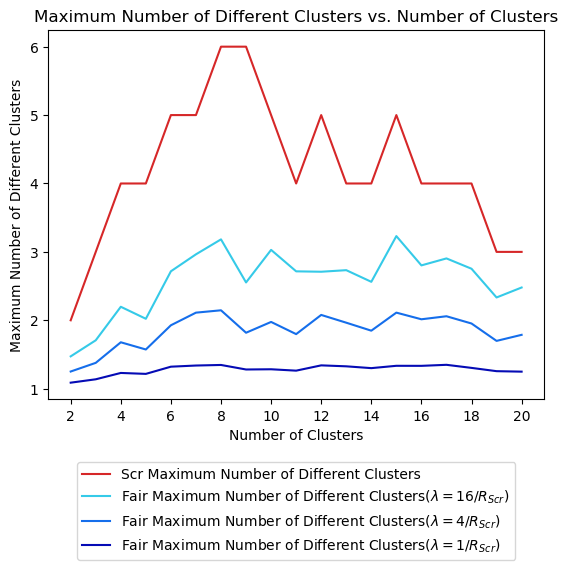}
\end{center}
  \caption{Comparison over different numbers of clusters, $k$, from 2 to 20 on the adult dataset. We measure the maximum pairwise separation ratio (left) and maximum number of different clusters any community is separated into (right). In both cases, lower values on the y-axis are more fair. We compare Scr to three versions of our algorithm parameterized by $\lambda$ of $16/R_{Scr}$, $4/R_{Scr}$, and $1/R_{Scr}$. We see that the most extreme fair algorithm, $\lambda = 1/R_{Scr}$, is not only the most fair, but most consistent across different values of $k$.
  }
\label{plot:adult-fair}
\end{figure*}

\subsection{Experimental results}

Figure~\ref{plot:pmed1-40} summarizes the main results of our $k$-center benchmark dataset experiments. Overall, we see a clear trade-off between fairness and minimizing the radius with the three different parameters of our fair algorithm.

For the maximum pairwise separation ratio, even our Tight algorithm is more fair than any of the unfair algorithms across almost all instances without paying too much cost in terms of larger cluster radii. This implies that even slight random perturbation of the clusters can dramatically improve fairness with limited impact on the maximum radius of the solution. The pairwise separation ratios for the Exact fair algorithm are roughly $1$ or less. Some pairwise separation ratios slightly above $1$ are to be expected even for Exact since this is the worst performance of any pair of points in a given problem instance and we are running only $10{,}000$ trials of each randomized algorithm. Likewise, the pairwise separation ratios of the Medium fair algorithm are roughly upper bounded by $4$ as expected. In several cases, the pairwise separation ratio for Exact is actually below $1$ meaning that \emph{every} pair of points $(u, v)$ in those instances with $d(u, v) \leq R_{Scr}$ is separated with probability less than $d(u, v)/R_{Scr}$.

With respect to community preservation, we can see that the performance of Tight approaches the two Gonzalez algorithms and is only slightly fairer than the unfair algorithms. However, the maximum average number of different clusters for Exact is always less than two. On some instances, Scr separates some small community of nearby points into 6 or more clusters while Exact gives every community a guarantee that it will be preserved in a single cluster with fairly good probability.

In summary, the fair and unfair algorithms perform as expected yielding a reasonable trade-off between fairness and small radii. The effect of adjusting the $\lambda$ parameter varies based on the structure of the input. In many cases, using a smaller $\lambda$ than Exact could be a desirable heuristic if assumptions can be made about the input. Another option, time permitting, is to perform a binary search for the $\lambda$ which best satisfies a desired balance of fairness and cluster tightness.

\section{Experiments on real data}
\label{sec:real-exp}

We ran additional experiments on a sample of 1,000 points from the adult dataset~\cite{Kohavi1996}. To create the metric space, we normalized the numeric features of age, education-num, and hours-per-week and used them to define points in euclidean space.

\begin{figure}[h!]
\includegraphics[width=7cm]{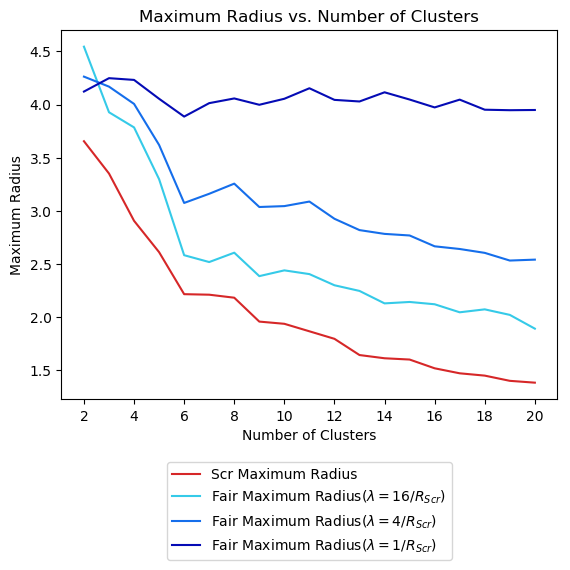}
  \caption{Comparison over different numbers of clusters, $k$, from 2 to 20 on the adult dataset. Here, we measure the maximum radius. In both cases, lower values on the y-axis represent more optimally compact clusters. We compare Scr to three versions of our algorithm parameterized by $\lambda$ of $16/R_{Scr}$, $4/R_{Scr}$, and $1/R_{Scr}$. We see that the more extreme fair algorithms (smaller $\lambda$ parameter) suffer a greater price of fairness, but this is constrained within the theoretical bounds shown in Section~\ref{sec:fairextension}.
  }
\label{plot:adult-radius}
\end{figure}

\subsection{Experimental design}

The design is similar to Section~\ref{sec:exp} with the following changes. To evaluate performance while changing the parameter $k$, we now study a single dataset, but vary the number of clusters, $k$, from 2 to 20. Given that we do not know the optimal radius for this data under different numbers of clusters, we use the actual radius instead of a ratio in Figure~\ref{plot:adult-radius}. In addition, we only consider one ``unfair'' algorithm, Scr, which gets closest to the optimal radius in practice.

\subsection{Experimental results}

Figure~\ref{plot:adult-fair} shows that the fairer algorithms are more fair as expected. However, we also see that as we scale the parameter toward greater fairness, the fairness level becomes more consistent and robust to different values of $k$. Figure~\ref{plot:adult-radius} illustrates the price of fairness we pay in terms of the maximum radius of any cluster. In all plots, we see predictably strange behavior at the extreme low values of $k$ (e.g., when $k = 2$, the maximum number of clusters a community can be fractured into is at most 2).

\section{Conclusion and future directions}

We introduced and motivated the concepts of pairwise fairness and community preservation to the $k$-center clustering problem. To explore the practicality of such constraints, we designed a randomized algorithm that can be combined with existing $k$-center algorithms or heuristics to ensure fairness at the expense of the objective value. We validated our algorithm both theoretically and experimentally.

In terms of future work, there are several open questions around how these new fairness concepts can be combined with other constraints or objectives including other definitions of fairness. For the $k$-center problem itself, it is unknown whether our bounds on fairness or the objective function can be improved. Further, one could ask if these fairness properties can be extended to variants of $k$-center such as capacitated $k$-center which is well-motivated by many real world applications. Other natural constraints to combine with include other notions of fairness or linkage constraints as seen in semi-supervised learning. We note that pairwise fairness and community preservation can be directly at odds with group fairness (e.g. if points belonging to the same group tend to be close together in the metric space). Finding the trade-off between these fairness concepts is an open problem although it is not clear that many application contexts would require both at the same time. Finally, these definitions could be extended to other common objectives such as k-median and k-means. Our algorithm targets $\alpha$ and $\beta$ which are functions of the unfair radius $R$, a natural parameter given the $k$-center objective.  However, for $k$-median, we may instead use the average distance from points to centers. While it is easy to see how our fairness definitions could apply to other objectives, our algorithm does not extend to these objectives.

\section*{Acknowledgements}

The authors wish to thank the anonymous reviewers for helpful feedback in improving the paper. Brian Brubach is supported in part by NSF awards CCF-1422569 and CCF-1749864, and by research awards from Adobe. Darshan Chakrabarti was supported in part via an REU grant, NSF CCF-1852352.  John Dickerson was supported in part by NSF CAREER Award IIS \#1846237, DARPA SI3-CMD Award S4761, and a Google Faculty Research Award. Samir Khuller is supported by an Amazon Research Award and an Adobe Award. Aravind Srinivasan is supported in part by NSF awards CCF-1422569, CCF-1749864, and CCF-1918749 as well as research awards from Adobe, Amazon, and Google. Leonidas Tsepenekas is supported in part by NSF awards CCF-1749864 and CCF-1918749, and by research awards from Amazon and Google.

\bibliography{refs}
\bibliographystyle{icml2020}

%
%
%

\end{document}